\newtheorem{theorem}{Theorem}
\newtheorem{remark}{Remark}
\begin{document}

\title{\LARGE \bf PDE-based Dynamic Control and Estimation of Soft Robotic Arms}
\author{Tongjia Zheng and Hai Lin
\thanks{*This work was supported by the National Science Foundation under Grant No. CNS-1830335, IIS-2007949.}
\thanks{Tongia Zheng and Hai Lin are with the Department of Electrical Engineering, University of Notre Dame, Notre Dame, IN 46556, USA (e-mail: tzheng1@nd.edu, hlin1@nd.edu.). }
}

\maketitle

\thispagestyle{empty}
\pagestyle{empty}

\begin{abstract}
Compared with traditional rigid-body robots, soft robots not only exhibit unprecedented adaptation and flexibility but also present novel challenges in their modeling and control because of their infinite degrees of freedom.
Most of the existing approaches have mainly relied on approximated models so that the well-developed finite-dimensional control theory can be exploited.
However, this may bring in modeling uncertainty and performance degradation. 
Hence, we propose to exploit infinite-dimensional analysis for soft robotic systems.
Our control design is based on the increasingly adopted Cosserat rod model, which describes the kinematics and dynamics of soft robotic arms using nonlinear partial differential equations (PDE).
We design infinite-dimensional state feedback control laws for the Cosserat PDE model to achieve trajectory tracking (consisting of position, rotation, linear and angular velocities) and prove their uniform tracking convergence.
We also design an infinite-dimensional extended Kalman filter on Lie groups for the PDE system to estimate all the state variables (including position, rotation, strains, curvature, linear and angular velocities) using only position measurements.
The proposed algorithms are evaluated using simulations.
\end{abstract}


\section{Introduction}
Soft robotics is becoming an increasingly active research area in the realm of robotics.
Made of deformable materials, soft robots have the potential to exhibit unprecedented adaptation, sensitivity, and agility \cite{rus2015design}.
However, due to their infinite degrees of freedom, soft robotic systems also present novel challenges in their modeling and control.

There have been growing research activities in the modeling and control of soft robots.
Piecewise constant curvature (PCC) models are probably the most adopted strategy \cite{webster2010design}.
In this approach, a soft robotic arm is considered as consisting of a finite number of curved segments.
Each segment is represented by its curvature, arc length, and the angle of the plane containing the arc.
Then, the configuration space is approximated by a finite number of variables.
The PCC approach has produced fruitful results ranging from kinematic control to dynamic control in the last two decades \cite{webster2010design, marchese2016dynamics, della2020model}.
However, this constant curvature approximation is not always valid especially when the robot undergoes external loads and exhibits large deformation.
The finite element method (FEM) is another popular approximation approach for modeling and controlling soft robots \cite{duriez2013control, goury2018fast}, which represents the deformable shape as a set of mesh nodes together with the information of their neighbors.
While FEM can potentially model a wide range of geometric shapes, the major drawback lies in its computation cost.
Thus, control design based on FEM needs to rely on further approximations such as quasi-static assumptions, linearization, and model reduction \cite{goury2018fast}.

More accurate models are those derived from continuum mechanics, especially the Cosserat rod theory for rod-like soft robots \cite{antman2005nonlinear, rucker2011statics, renda2014dynamic}.
Cosserat rod theory describes the time evolution of the infinite-dimensional kinematic variables of a deformable rod undergoing external forces and moments using a set of nonlinear partial differential equations (PDE).
The previously mentioned PCC and FEM models, to some extent, can be viewed as finite-dimensional approximations of the Cosserat PDE models \cite{della2021model}.
Despite their modeling accuracy, nonlinear PDEs are known to be challenging for control purposes due to the lack of effective control design tools and analysis frameworks.
As a result, most existing control design based on Cosserat PDEs has relied on linearized \cite{shivakumar2021decentralized} or discretized models \cite{renda2018discrete, till2019real, grazioso2019geometrically, george2020first, doroudchi2021configuration}.
A notable exception is the energy shaping control presented in \cite{chang2020energy}.

Over the past years, theoretical and experimental studies have suggested that feedback schemes are more robust to modeling uncertainties of soft robots \cite{della2021model}.
A feedback design usually requires intermediate states such as rotation, curvature, and velocity.
It is difficult and undesired to embed various types of sensors into soft robots because they will undermine the inherent softness and bring in additional modeling errors.
The more desired approach is to measure the states that are easily available and use observers to infer other intermediate states.
Compared with modeling and control, the state estimation problem of soft robots has remained largely unexplored until recently \cite{lunni2018shape, thieffry2018control, loo2019h}.
Nevertheless, the estimation problem studied therein is based on simplified or approximated models.


In this work, our control design is based directly on the nonlinear Cosserat PDE models.
Besides nonlinearity, another challenge lies in that the Cosserat PDE evolves on $SE(3)$, the special Euclidean group, which is not a vector space.
We address this challenge by extending geometric control \cite{lee2010geometric} and estimation on Lie groups \cite{barfoot2017state} to infinite-dimensional systems.
Assuming full actuation, we design state feedback control to achieve trajectory tracking in the task space (consisting of position, rotation, linear and angular velocities), and prove their uniform tracking convergence.
To estimate the feedback states, we show how to linearize the Cosserat PDE on Lie groups using exponential maps and design an infinite-dimensional extended Kalman filter \cite{bensoussan1971filtrage} to estimate all the state variables (including position, rotation, strains, curvature, velocities) using only position measurements.
Leveraging this work, we aim to make a step towards the exploitation of infinite-dimensional control and estimation theory for soft robotic systems.

The remainder of the paper is organized as follows.
The Cosserat rod model is introduced in Section \ref{section:modeling}. 
In Section \ref{section:control}, we design infinite-dimensional state feedback controllers and prove their stability. 
In Section \ref{section:estimation}, we design infinite-dimensional extended Kalman filters for the Cosserat PDE model.
Section \ref{section:simulation} presents simulations to verify the effectiveness of the algorithms.
Section \ref{section:conclusion} summarizes the contribution and points out future research.

\section{modeling of soft robotic arms}
\label{section:modeling}
The special orthogonal group is denoted by $SO(3)=\{R\in\mathbb{R}^{3\times3}\mid R^TR=I,\operatorname{det}R=1\}$.
The associated Lie algebra is the set of skew-symmetric matrices $\mathfrak{so}(3)=\{A\in\mathbb{R}^{3\times3}\mid A=-A^T\}$.
Define the hat operator $(\cdot)^\wedge:\mathbb{R}^3\to\mathfrak{so}(3)$ by the condition that $u^\wedge v=u\times v$ for all $u,v\in\mathbb{R}^3$, where $\times$ denotes the cross product.
Let $(\cdot)^\vee:\mathfrak{so}(3)\to\mathbb{R}^3$ be its inverse operator, i.e., $(u^\wedge)^\vee=u$.

Cosserat rod theory describes the dynamic response of a long and thin deformable rod undergoing external forces and moments, and is widely adopted to model soft robotic arms \cite{rucker2011statics, renda2014dynamic}.
In Cosserat rod theory, a rod is represented as a curve in space and the state variables are defined as functions of time $t\in\mathbb{R}$ and an arc length parameter $s\in \mathbb{L}=[0~L]$ along the undeformed rod centerline, where $L$ is the total length; see Fig. \ref{fig:Cosserat rod}.
(It is important to emphasize that the arc length parameter is defined along the undeformed rod because the total rod length may change after deformation.)
Thus, the pose of a rod is uniquely determined by the position $p(s,t)\in\mathbb{R}^{3}$ and orientation $R(s,t)\in SO(3)$ of every cross section at $s$ in the global frame.
Since every cross section can be associated with a local frame, a rod has infinite degrees of freedom.
In the local frames, denote by $\nu(s,t)\in\mathbb{R}^3$ the linear velocity, $\omega(s,t)\in\mathbb{R}^3$ the angular velocity, $q(s,t)\in\mathbb{R}^3$ the linear strains (for shear and extension), and $u(s,t)\in\mathbb{R}^3$ the angular strains (for bending/curvature and torsion); see Fig. \ref{fig:strains}.
Finally, in the global frame, denote by $n(s,t)\in\mathbb{R}^{3}$ and $m(s,t)\in\mathbb{R}^{3}$ the internal forces and moments, respectively, and $f(s,t)\in\mathbb{R}^3$ and $l(s,t)\in\mathbb{R}^3$ the external/distributed forces and moments, respectively.
The dynamic response of a Cosserat rod is characterized by the following set of PDEs \cite{antman2005nonlinear,simo1988dynamics}:
\begin{align}
    p_s& =Rq \label{eq:p_s}\\
    R_s& =Ru^\wedge \label{eq:R_s}\\
    p_t& =R\nu \label{eq:p_t local}\\
    R_t& =R\omega^\wedge \label{eq:R_t}\\
    n_s+f& =\rho \sigma p_{tt} \label{eq:linear motion equation}\\
    m_s+p_s\times n+l& =(\rho RJ\omega)_t \label{eq:angular motion equation}\\
    n& =RK_l\left(q-\bar{q}\right) \label{eq:linear constitutive law}\\
    m& =RK_a\left(u-\bar{u}\right) \label{eq:angular constitutive law}
\end{align}
where $(\cdot)_t:=\frac{\partial}{\partial t}(\cdot)$ and $(\cdot)_s:=\frac{\partial}{\partial s}(\cdot)$ represent partial derivatives, $\rho(s)\in\mathbb{R}$ is the density of the rod, $\sigma(s)\in\mathbb{R}$ is the cross-sectional area, $J(s)\in\mathbb{R}^{3\times3}$ is the rotational inertia matrix, $\bar{q}(s)$ and $\bar{u}(s)$ are the undeformed values of $q$ and $u$.
In the case of straight reference configuration, $\bar{q}(s)=[0~0~1]^T$ and $\bar{u}=[0~0~0]^T$.
$K_l(s)\in\mathbb{R}^{3\times3}$ and $K_a(s)\in\mathbb{R}^{3\times3}$ are the linear and angular stiffness matrices given by
\begin{equation*}
    K_l(s)=\operatorname{diag}(G,G,E)\sigma(s), \quad K_a(s)=\operatorname{diag}(E,E,G)J(s),
\end{equation*}
where $E\in\mathbb{R}$ and $G\in\mathbb{R}$ are the Young's and shear moduli.
Assume the material parameters are constant and known.
Also assume $f=f_e+f_c$ and $l=l_e+l_c$ where $f_e(s,t)\in\mathbb{R}^3$ and $l_e(s,t)\in\mathbb{R}^3$ are known environment forces (e.g., gravity) and moments, $f_c(s,t)\in\mathbb{R}^3$ and $l_c(s,t)\in\mathbb{R}^3$ are control inputs.

\begin{figure}[t]
    \centering
    \begin{subfigure}[b]{0.8\columnwidth}
        \centering
        \includegraphics[width=\textwidth]{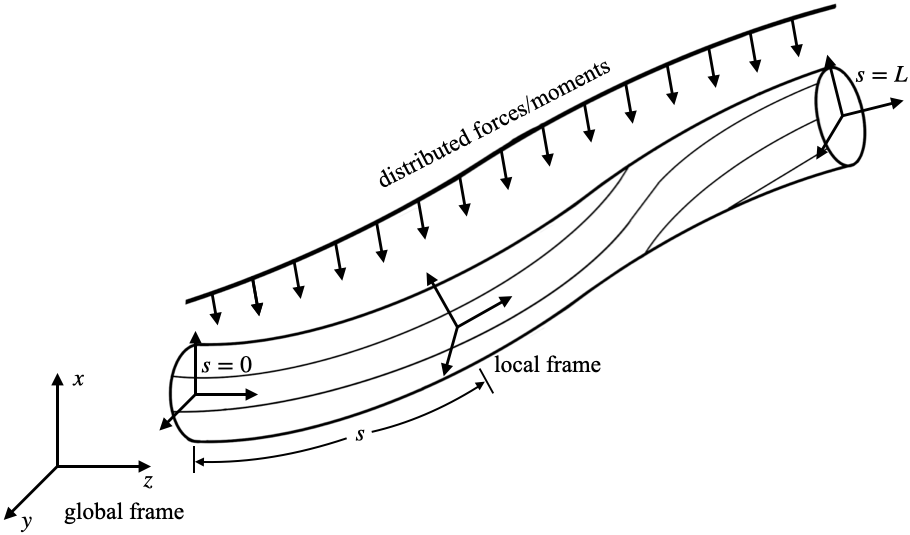}
        \caption{A Cosserat rod.}
        \label{fig:Cosserat rod}
    \end{subfigure}
    
    \begin{subfigure}[b]{0.6\columnwidth}
        \centering
        \includegraphics[width=\textwidth]{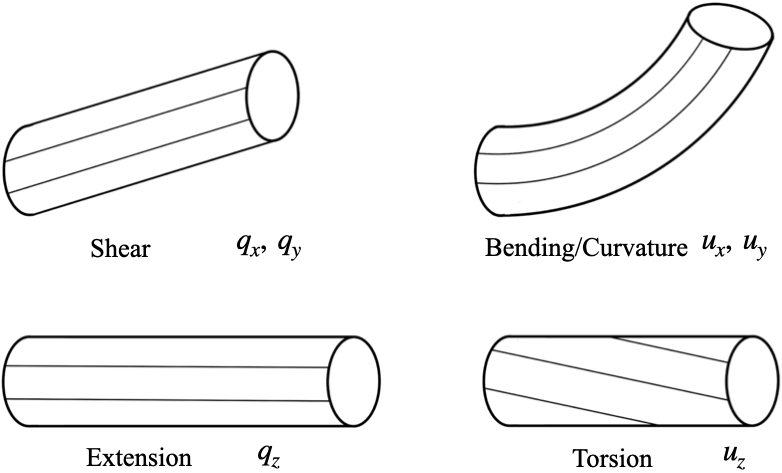}
        \caption{Four types of strains.}
        \label{fig:strains}
    \end{subfigure}
    \caption{Illustration of Cosserat rods.}
\end{figure}

Equations \eqref{eq:p_s}-\eqref{eq:R_t} are the kinematic equations.
Equations \eqref{eq:linear motion equation}-\eqref{eq:angular motion equation} are the dynamic equations.
Equations \eqref{eq:linear constitutive law}-\eqref{eq:angular constitutive law} are the constitutive laws where linear constitutive laws are assumed.
Depending on the material, other forms of constitutive laws may be adopted.
Note that there are only four independent state variables in \eqref{eq:p_s}-\eqref{eq:angular constitutive law}.
For control purposes, one will find it more convenient to work with the global linear velocity, which is denoted by $v(s,t)\in\mathbb{R}^3$ and satisfies $v=R\nu$.
By substituting \eqref{eq:linear constitutive law}-\eqref{eq:angular constitutive law} into \eqref{eq:linear motion equation}-\eqref{eq:angular motion equation} and using $v_t=R_t\nu+R\nu_t$, we obtain the PDE model with minimum number of states:
\begin{align} \label{eq:PDE system}
\begin{split}
    p_t= & v \\
    R_t= & R\omega^\wedge \\
    v_t= & \frac{1}{\rho\sigma}\big(RK_l(q_s-\bar{q}_s)+R_sK_l(q-\bar{q})+f_e+f_c\big) \\
    \begin{split}
        \omega_t= & (\rho J)^{-1}\big(K_a(u_s-\bar{u}_s)+u^\wedge K_a(u-\bar{u}) \\
        & +q^\wedge K_l(q-\bar{q})-\omega^\wedge\rho J\omega+R^T(l_e+l_c)\big),
    \end{split}
\end{split}
\end{align}
which can be seen as a change of coordinates of the models derived in \cite{rucker2011statics, renda2014dynamic}.
This change of coordinates, however, makes the control design and stability analysis easier, which will be seen later.
Finally, assuming the soft robotic arm has a fixed end and a free end, the boundary conditions are given by
\begin{align*}
    & p(0,t)=p_0,\quad R(0,t)=R_0, \\
    & n(L,t)=0, \quad m(L,t)=0.
\end{align*}

\section{Control design of soft robotic arms}
\label{section:control}
In this section, we design infinite-dimensional state feedback control for the soft robotic arm to track a desired trajectory in task space.
Assume that a smooth desired trajectory consisting of $p_*(s,t)\in\mathbb{R}^3$ and $R_*(s,t)\in SO(3)$ (alternatively written as $p^*$ and $R^*$ when the subscript position is needed for other notations) is given and satisfies:
\begin{align} \label{eq:desired trajectory}
    p_t^*=v_*,\qquad R_t^*=R_*\omega_*^\wedge,
\end{align}
where $v_*(s,t)\in\mathbb{R}^3$ and $\omega_*(s,t)\in\mathbb{R}^3$ (alternatively $v^*$ and $\omega^*$) are the desired global linear velocity and local angular velocity, respectively, which are assumed to be smooth and uniformly bounded.
Define the following error terms:
\begin{align*}
e_p &=p-p_*, & e_R &=\frac{1}{2}(R_*^TR-R^TR_*)^\vee, \\
e_v &=v-v_*, & e_\omega &=\omega-R^TR_*\omega_*.
\end{align*}
Note that $R_*^TR\to I$ when $R\to R_*$.
We aim to design control inputs $f_c$ and $l_c$ such that all the error terms converge to 0.

The first step is a feedforward transform to cancel the nonlinear dynamics.
Noting that $R^{-1}=R^T$, we let
\begin{align} \label{eq:feedforward transform}
\begin{split}
    f_c= & -RK_l(q_s-\bar{q}_s)-R_sK_l(q-\bar{q})+\rho\sigma f_*-f_e \\
    l_c= & -R\Big[K_a\big(u_s-\bar{u}_s\big)+u^\wedge K_a(u-\bar{u}) \\
    & +q^\wedge K_l(q-\bar{q})-\omega^\wedge\rho J\omega-\rho Jl_*\Big]-l_e.
\end{split}
\end{align}
Substituting \eqref{eq:feedforward transform} into \eqref{eq:PDE system}, we obtain:
\begin{align}
    p_t= & v \label{eq:p}\\
    v_t= & f_* \label{eq:v}\\
    R_t= & R\omega^\wedge \label{eq:R}\\
    \omega_t= & l_*. \label{eq:omega}
\end{align}

\begin{remark}
We note that the transformed system \eqref{eq:p}-\eqref{eq:omega} is no longer a PDE.
Instead, it is an infinite-dimensional (or parametrized) ODE.
One may recognize that for any fixed $s\in\mathbb{L}$, the transformed system resembles a fully actuated control system on $SE(3)$.
This important observation suggests that many control design techniques on $SE(3)$ (such as those developed for quadrotors \cite{lee2010geometric}) can potentially be extended to tackle the control problem of soft robotic arms.
We also note that the position control problem \eqref{eq:p}-\eqref{eq:v} and rotation control problem \eqref{eq:R}-\eqref{eq:omega} are independent after the feedforward transform.
This is the advantage of using the global linear velocity as a state.
The decoupling also makes stability analysis easier.
\end{remark}

The position control \eqref{eq:p}-\eqref{eq:v} is simply a linear control problem, for which we design $f_*$ as
\begin{equation} \label{eq:f*}
    f_*=v_t^*-k_pe_p-k_ve_v,
\end{equation}
where $k_p(s),k_v(s)\in\mathbb{R}$ are smooth positive functions.
For the rotation control problem, we design $l_*$ as 
\begin{equation} \label{eq:l*}
    l_*=R^TR_*\omega_t^*-k_Re_R-k_\omega e_\omega-\omega^\wedge R^TR_*\omega_*,
\end{equation}
where $k_{R}(s),k_{\omega}(s)\in\mathbb{R}$ are smooth positive functions.
This controller can be seen as a parametrization (by $s$) of the geometric controller in \cite{lee2010geometric}.
We allow the gains $(k_p,k_v,k_R,k_\omega)$ to be functions of $s$ which can provide flexibility to the implementation of these control laws.

We have the following uniform (in $s$) convergence result.

\begin{theorem}
\label{thm:control}
Consider the PDE system \eqref{eq:PDE system}.
Let $(f_c,l_c)$ be designed as \eqref{eq:feedforward transform} where $(f_*,l_*)$ are given by \eqref{eq:f*}-\eqref{eq:l*}.
If the initial conditions satisfy, for $\forall s$, 
\begin{align}
    & \operatorname{tr}[I-R_*(s,0)^TR(s,0)]<4, \label{eq:constraint 1}\\
    & \|e_\omega(s,0)\|^2<k_R(s)(4-\operatorname{tr}[I-R_*(s,0)^TR(s,0)]), \label{eq:constraint 2}
\end{align}
then as $t\to\infty$, $\big(e_p(s,t),e_v(s,t),e_R(s,t),e_\omega(s,t)\big)\to0$ for $\forall s$ exponentially.
\end{theorem}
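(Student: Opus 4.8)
The plan is to exploit the decoupling noted in the Remark: after the feedforward transform \eqref{eq:feedforward transform}, the closed loop splits into an independent position subsystem \eqref{eq:p}--\eqref{eq:v} and rotation subsystem \eqref{eq:R}--\eqref{eq:omega}, each parametrized by $s$. Since $s$ enters only as a parameter, I would first establish pointwise (for each fixed $s$) exponential convergence using Lyapunov arguments adapted from the finite-dimensional geometric control of \cite{lee2010geometric}, and then upgrade this to uniform-in-$s$ convergence using that the gains are smooth on the compact interval $\mathbb{L}$.

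First I would derive the error dynamics, the point being that the control laws \eqref{eq:f*}--\eqref{eq:l*} are designed precisely so that the feedforward and Coriolis-type terms cancel. Differentiating the position errors (in time, for fixed $s$) and substituting \eqref{eq:f*} gives the linear system
\begin{align*}
    \dot{e}_p = e_v, \qquad \dot{e}_v = -k_p e_p - k_v e_v.
\end{align*}
For the rotation errors, writing $R_e = R_*^T R$ and using \eqref{eq:R}, \eqref{eq:l*} together with the identity $R_e(R_e^T\omega_*)^\wedge = \omega_*^\wedge R_e$, I expect the cancellation to leave the clean dynamics
\begin{align*}
    \dot{R}_e = R_e\, e_\omega^\wedge, \qquad \dot{e}_\omega = -k_R e_R - k_\omega e_\omega.
\end{align*}

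For the position subsystem, the Lyapunov function $V_p = \tfrac12\|e_v\|^2 + \tfrac12 k_p\|e_p\|^2 + c_p\, e_p^T e_v$ with small $c_p>0$ is positive definite and yields $\dot V_p$ negative definite as a quadratic form in $(\|e_p\|,\|e_v\|)$, giving pointwise exponential decay. For the rotation subsystem the essential step is the two geometric identities $\dot\Psi = e_R^T e_\omega$ and $\dot{e}_R = C(R_e)\,e_\omega$ with $\|C(R_e)\|\le 1$, where $\Psi=\tfrac12\operatorname{tr}[I-R_e]$; these follow from the trace identity $\operatorname{tr}[Bx^\wedge]=-x^T(B-B^T)^\vee$ and the relation $(R_e-R_e^T)^\vee=2e_R$. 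I would then use $V_R = \tfrac12\|e_\omega\|^2 + k_R\Psi + c_R\, e_R^T e_\omega$ for small $c_R>0$. The cross terms $\pm k_R e_R^T e_\omega$ cancel in $\dot V_R$, leaving (after bounding $\|C\|\le1$)
\begin{align*}
    \dot V_R \le -c_R k_R\|e_R\|^2 - (k_\omega - c_R)\|e_\omega\|^2 - c_R k_\omega\, e_R^T e_\omega,
\end{align*}
which is negative definite for $c_R$ small. The initial conditions \eqref{eq:constraint 1}--\eqref{eq:constraint 2} play exactly the role of the sublevel-set argument in \cite{lee2010geometric}: \eqref{eq:constraint 1} gives $\Psi(s,0)<2$, while \eqref{eq:constraint 2} is equivalent to $\tfrac12\|e_\omega(s,0)\|^2<k_R(2-\Psi(s,0))$, i.e.\ $V_R(s,0)<2k_R$ up to the $O(c_R)$ cross term. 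Since $\dot V_R\le0$, the sublevel set $\{V_R<2k_R\}$ is forward invariant and stays inside the region $\Psi<2$, on which $\tfrac12\|e_R\|^2\le\Psi\le\tfrac{1}{2-\Psi}\|e_R\|^2$ makes $\Psi$ and $\|e_R\|^2$ equivalent; this closes the pointwise exponential estimate.

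The main obstacle, and the genuinely infinite-dimensional part, is upgrading pointwise convergence to \emph{uniform}-in-$s$ convergence. Here I would use that the gains $k_p,k_v,k_R,k_\omega$ are smooth and positive on the compact interval $\mathbb{L}$, hence bounded above and below by positive constants. The admissible $c_p,c_R$ and the resulting decay rates and overshoot constants in the Lyapunov estimates depend only on these uniform gain bounds, so the exponential rate can be chosen independently of $s$; and since \eqref{eq:constraint 1}--\eqref{eq:constraint 2} are assumed for all $s$, the forward invariance of $\{\Psi<2\}$ holds uniformly. Combining the two estimates then yields $\big(e_p(s,t),e_v(s,t),e_R(s,t),e_\omega(s,t)\big)\to0$ exponentially, uniformly in $s$, with exponential decay of $e_R$ on the invariant region giving convergence of the attitude itself.
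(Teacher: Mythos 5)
Your proposal is correct and follows essentially the same route as the paper's proof: the same decoupled error dynamics, the same Lyapunov functions (a quadratic form for the position subsystem and $k_R\Psi+\tfrac12\|e_\omega\|^2+c\,e_R^Te_\omega$ for the rotation subsystem), and the same sublevel-set argument from \cite{lee2010geometric} in which \eqref{eq:constraint 1}--\eqref{eq:constraint 2} guarantee $\Psi(s,0)<2$ and keep the trajectory in the region where $\Psi$ and $\|e_R\|^2$ are equivalent. Your explicit remarks on the cancellation identities and on extracting a uniform-in-$s$ rate from the smoothness and positivity of the gains on the compact interval $\mathbb{L}$ merely spell out steps the paper leaves implicit, so no substantive difference or gap exists.
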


\begin{proof}
The proof requires some arguments in the proof of Proposition 1 in \cite{lee2010geometric}, which will be included for completeness.
Substituting \eqref{eq:desired trajectory} and \eqref{eq:f*}-\eqref{eq:l*} into \eqref{eq:p}-\eqref{eq:omega}, we obtain:
\begin{align*}
    \partial_te_p & =e_v, \\
    \partial_te_v & =-k_pe_p-k_ve_v \\
    \partial_te_R & =\frac{1}{2}(R_*^TRe_\omega^\wedge+e_\omega^\wedge R^TR_*)^\vee \\
    & =\frac{1}{2}(\operatorname{tr}[R^TR_*]I-R^TR_*)e_\omega \\
    & =:C(R_*^TR)e_\omega \\
    \partial_te_\omega & =-k_Re_R-k_\omega e_\omega
\end{align*}
where we used the identity $x^\wedge A+A^Tx^\wedge=\big((\operatorname{tr}[A]I-A)x\big)^\wedge$ for $A\in\mathbb{R}^{3\times3}$, $x\in\mathbb{R}^3$.
It is known that $\|C(R_*^TR)\|\leq1$ \cite{lee2010geometric}.
Consider a parametrized Lyapunov function $V(s,t)=V_1(s,t)+V_2(s,t)$ with
\begin{align*}
    V_1(s,t) & =
    \begin{bmatrix}
    e_p \\
    e_v
    \end{bmatrix}^TP(s)
    \begin{bmatrix}
    e_p \\
    e_v
    \end{bmatrix} \\
    V_2(s,t) & =\frac{1}{2}k_R(s)\operatorname{tr}[I-R_*^TR]+\frac{1}{2}e_\omega^Te_\omega+c(s)e_R^Te_\omega
\end{align*}
where $P(s)\in\mathbb{R}^{6\times6}$ and $c(s)\in\mathbb{R}$ will be determined later.
Since $(e_p,e_v)$ satisfy a linear system, it is easy to show that there exists positive definite $P(s)$ such that $\partial_tV_1$ is negative definite for $\forall s$.
For $V_2$, it is known that \cite{lee2010geometric}
\begin{align*}
    e_2^TM_1(s)e_2\leq V_2\leq e_2^TM_2(s)e_2
\end{align*}
for $\forall s$, where $e_2(s,t)=[\|e_R(s,t)\|,\|e_\omega(s,t)\|]^T$ and
\begin{align*}
    M_1(s)=\frac{1}{2}
    \begin{bmatrix}
    k_R(s) & -c(s) \\
    -c(s) & 1
    \end{bmatrix}, \quad M_2(s)=\frac{1}{2}
    \begin{bmatrix}
    \frac{2k_R(s)}{2-d(s)} & c(s) \\
    c(s) & 1
    \end{bmatrix}
\end{align*}
where $d(s):=V_2(s,0)/k_R(s)\mid_{c=0}$ and $d(s)<2$ if \eqref{eq:constraint 1}-\eqref{eq:constraint 2} hold.
Next,
\begin{align*}
    \partial_tV_2 & =-ck_Re_R^Te_R-\big(k_\omega-cC(R_*^TR)\big)e_\omega^Te_\omega-ck_\omega e_R^Te_\omega.
\end{align*}
Since $\|C(R_*^TR)\|\leq1$, if we choose
\begin{align*}
    0<c(s)<\min\Big\{k_\omega(s),\frac{4k_R(s)k_\omega(s)}{k_\omega^2(s)+4k_R(s)},\sqrt{k_R(s)}\Big\},
\end{align*}
$V_2$ is positive definite and $\partial_tV_2$ is negative definite for $\forall s$.
\end{proof}

In this theorem, assumption \eqref{eq:constraint 1} almost always holds because by definition $\operatorname{tr}[I-R_*(s,0)^TR(s,0)]\leq4$ and the identity holds only when one of the initial angular errors is $\pi$ rad which has zero measure.
Assumption \eqref{eq:constraint 2} holds as long as we choose a sufficiently large $k_R(s)$.

\begin{remark}
The control law in Theorem \ref{thm:control} has been designed assuming full actuation.
In the implementation, it needs to be approximated because we can only place a finite number of actuators.
This allocation problem highly depends on the actuators and is under study.
\end{remark}

\section{State estimation of soft robotic arms}
\label{section:estimation}
This section studies the state estimation problem.
Assume the output is a noisy position measurement given by
\begin{equation*}
    y(s,t)=p(s,t)+w(s,t),
\end{equation*}
where $w(s,t)\in\mathbb{R}^3$ is the measurement noise assumed to be Gaussian with covariance operator $\mathcal{R}(t)$.
Such a position measurement can be obtained using stereo cameras.

\begin{remark}
We present a brief discussion about the observability problem in this setup.
First, $v$ can be estimated from $p_t$.
According to the third equation of \eqref{eq:PDE system}, $R$ can be estimated from $(v_t,p)$, or $(p_{tt},p_t,p)$.
Finally, $\omega$ can be estimated from $R_t$, or $(p_{ttt},p_{tt},p_t,p)$.
The key step to estimating the angular states from linear states lies in the third equation of \eqref{eq:PDE system}.
Depending on the soft materials, if $(q-\bar{q})$, the change of linear strains, is negligible, the linear and angular states would be almost independent of each other.
In this case, additional angular measurements (of either $R$, $u$, or $\omega$) will be needed for more accurate estimation of the angular states.
\end{remark}

Since the Cosserat rod PDE \eqref{eq:PDE system} is nonlinear, we propose to design an infinite-dimensional extended Kalman filter \cite{bensoussan1971filtrage}.
The challenge lies in that $SO(3)$ is not a vector space.
Hence, an approximation like $R'\approx R+\delta R$ with $R,\delta R\in SO(3)$ may not make sense because $R'$ may not belong to $SO(3)$.
Fortunately, linearization for equations on $SO(3)$ (or more generally, Lie groups) has been studied in recent years.
We follow the strategy in Section 7.2.3 of \cite{barfoot2017state}, with appropriate generalization to infinite-dimensional Lie groups.

The key is to relate elements of $SO(3)$ (a Lie group) to elements of $\mathfrak{so}(3)$ (a vector space) using the exponential map:
\begin{equation*}
    R=\exp(\eta^\wedge)=\sum_{n=0}^{\infty}\frac{1}{n!}(\eta^\wedge)^n,
\end{equation*}
where $R(s,t)\in SO(3)$ and $\eta(s,t)\in\mathbb{R}^3$ (and hence $\eta^\wedge(s,t)\in\mathfrak{so}(3)$). 
Denote by $\delta(\cdot)$ an infinitesimal perturbation.
For a perturbed rotation matrix $R'(s,t)$, we have
\begin{equation*}
    R'=R\exp(\delta\eta^\wedge)\approx R(I+\delta\eta^\wedge)=R+R\delta\eta^\wedge\in SO(3),
\end{equation*}
where $R(s,t)$ is the nominal solution and $\delta\eta(s,t)$ is an infinitesimal perturbation acting as a rotation vector.
This linearization scheme suggests that we can take $\delta R\approx R\delta\eta^\wedge$.

Now we linearize the second equation of \eqref{eq:PDE system}.
Let $\omega'(s,t)$ be the perturbed angular velocity.
Then, the perturbed kinematic equation is given by $R_t'=R'(\omega')^\wedge$ which can be approximated as follows by dropping the higher-order terms.
\begin{align*}
    \big(R(I+\delta\eta^\wedge)\big)_t & \approx R(I+\delta\eta^\wedge)(\omega+\delta\omega)^\wedge \\
    R\omega^\wedge+R\omega^\wedge\delta\eta^\wedge+R\delta\eta_t^\wedge & \approx R\omega^\wedge+R\delta\eta^\wedge\omega^\wedge+R\delta\omega^\wedge \\
    \delta\eta_t & \approx-\omega^\wedge\delta\eta+\delta\omega,
\end{align*}
where we used the identity $x^\wedge y^\wedge-y^\wedge x^\wedge=(x^\wedge y)^\wedge$.
Thus, we obtain a linearized equation of $\eta(s,t)$ on a vector space, which is used to replace the second equation in \eqref{eq:PDE system}.
We linearize other equations in \eqref{eq:PDE system} using this perturbation scheme.
The complete derivation is included in the Appendix.

Denote $\xi=(p,\eta,v,\omega)\in\mathbb{R}^{12}$ and let $\hat{\xi}=(\hat{p},\hat{\eta},\hat{v},\hat{\omega})$ be an estimate of $\xi$.
(We distinguish between $\hat{(\cdot)}$ and $(\cdot)^\wedge$ where the former represents an estimate and the latter is the hat operator.)
For generality, we present the result when the equation of $\xi$ is disturbed by Gaussian noise with covariance operator $\mathcal{Q}(t)$, although in our problem $\mathcal{Q}(t)=0$.
The extended Kalman filter for $\xi$ is given by:
\begin{equation}
    \partial_t\hat{\xi}=\mathcal{F}(\hat{\xi},f_c,l_c)+\mathcal{K}(t)(y-\mathcal{C}\hat{\xi})
\end{equation}
where $\mathcal{F}$ is the nonlinear operator that represents the original dynamics in \eqref{eq:PDE system}, $\mathcal{K}(t)=\mathcal{P}(t)\mathcal{C}^*\mathcal{R}^{-1}(t)$ is the Kalman gain operator, and $\mathcal{P}(t)$ is the solution of the following infinite-dimensional Riccati equation:
\begin{equation*}
    \dot{\mathcal{P}}(t)=\mathcal{A}_{\hat{\xi}}\mathcal{P}(t)+\mathcal{P}(t)\mathcal{A}_{\hat{\xi}}^*+\mathcal{Q}(t)-\mathcal{P}(t)\mathcal{C}^*\mathcal{R}^{-1}(t) \mathcal{C}\mathcal{P}(t),
\end{equation*}
where $(\cdot)^*$ represents the adjoint operator.
(See the Appendix for the definition of these operators.)
The Kalman gain operator has the structure:
\begin{equation*}
    \mathcal{K}=\begin{bmatrix}
    \mathcal{K}_1 \\
    \vdots \\
    \mathcal{K}_4
    \end{bmatrix}
\end{equation*}
where $\mathcal{K}_i,i=1,\dots,4,$ are the corresponding gain operators for the four components of $\hat{\xi}$.
Then, the explicit form of the extended Kalman filter in terms of $\hat{R}$ is given by
\begin{align*}
\begin{split}
    \hat{p}_t= & \hat{v}+\mathcal{K}_1(y-\hat{p}) \\
    \hat{R}_t= & \hat{R}\big(\hat{\omega}+\mathcal{K}_2(y-\hat{p})\big)^\wedge \\
    \hat{v}_t= & \frac{1}{\rho\sigma}\Big[\hat{R}K_l(\hat{q}_s-\bar{q}_s)+\hat{R}_sK_l(\hat{q}-\bar{q})+(f_e+f_c)\Big] \\
    & +\mathcal{K}_3(y-\hat{p}) \\
    \hat{\omega}_t= & (\rho J)^{-1}\Big[K_a(\hat{u}_s-\bar{u}_s)+\hat{u}^\wedge K_a(\hat{u}-\bar{u}) \\
    & +\hat{q}^\wedge K_l(\hat{q}-\bar{q})-\hat{\omega}^\wedge\rho J\hat{\omega}+\hat{R}^T(l_e+l_c)\Big]+\mathcal{K}_4(y-\hat{p}),
\end{split}
\end{align*}
where we notice that when converted from $\hat{\eta}$ to $\hat{R}$, the innovation term $\mathcal{K}_2(y-\hat{p})$ is inserted into $\hat{\omega}$ instead of being added on the right-hand side.
Finally, to obtain estimates for $q$ and $u$, simply use $\hat{q}=\hat{R}^T\hat{p}_s$ and $\hat{u}=(\hat{R}^T\hat{R}_s)^\vee$.

\begin{figure*}[t]
    \centering
    \begin{subfigure}[b]{0.19\textwidth}
        \centering
        \includegraphics[width=\textwidth]{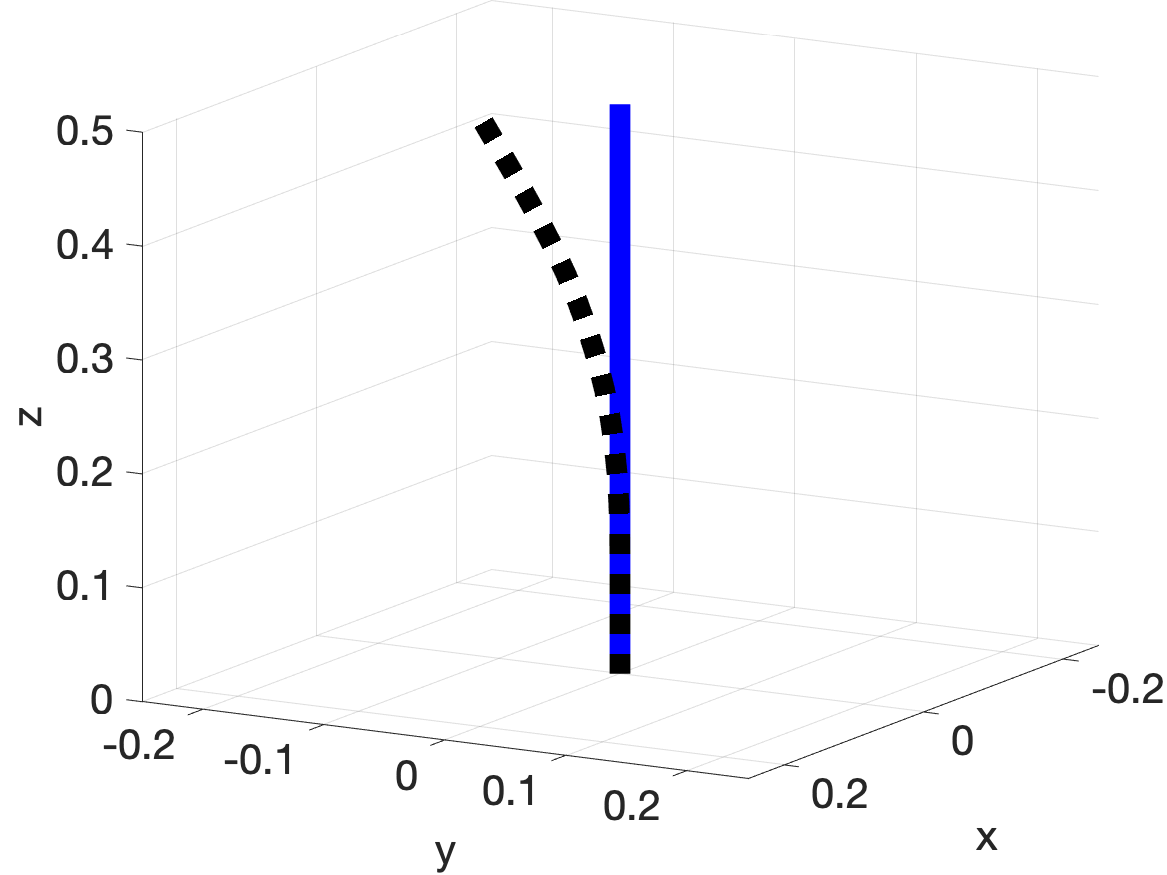}
    \end{subfigure}
    \begin{subfigure}[b]{0.19\textwidth}
        \centering
        \includegraphics[width=\textwidth]{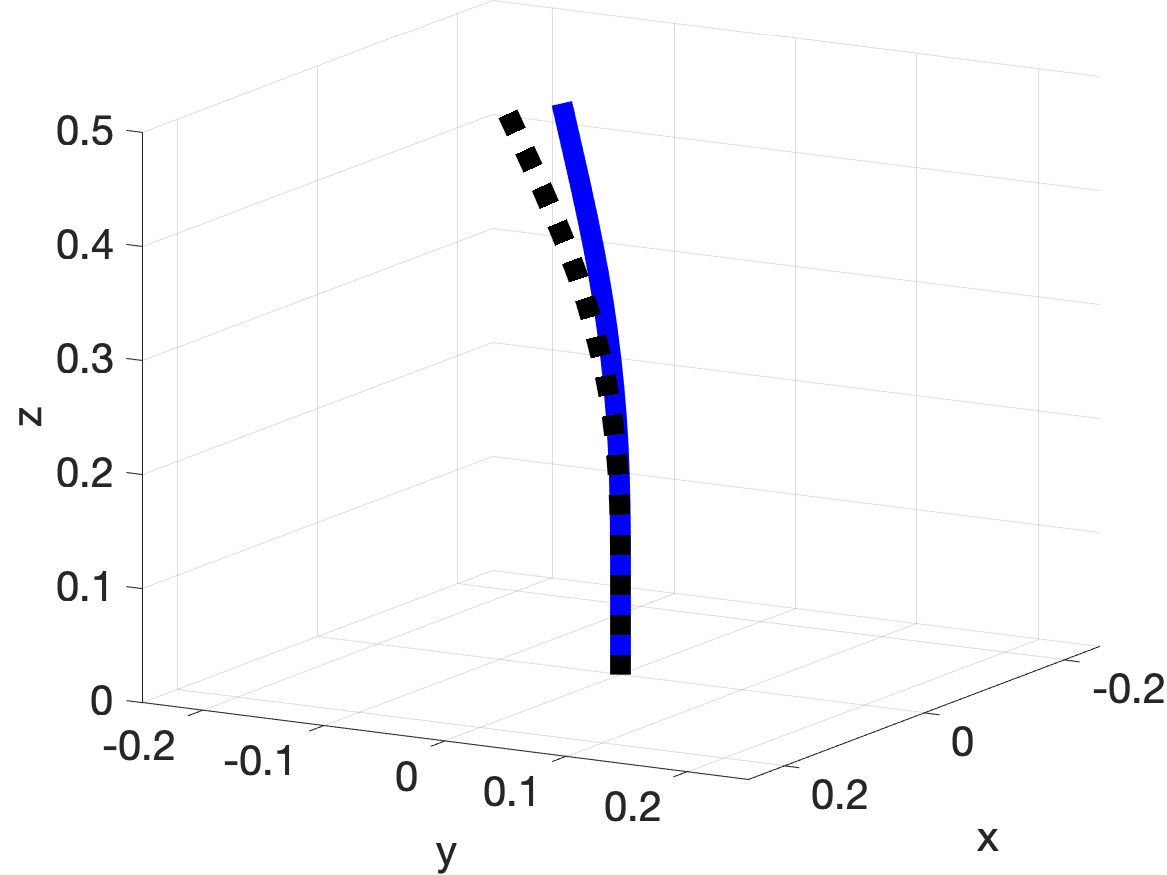}
    \end{subfigure}
    \begin{subfigure}[b]{0.19\textwidth}
        \centering
        \includegraphics[width=\textwidth]{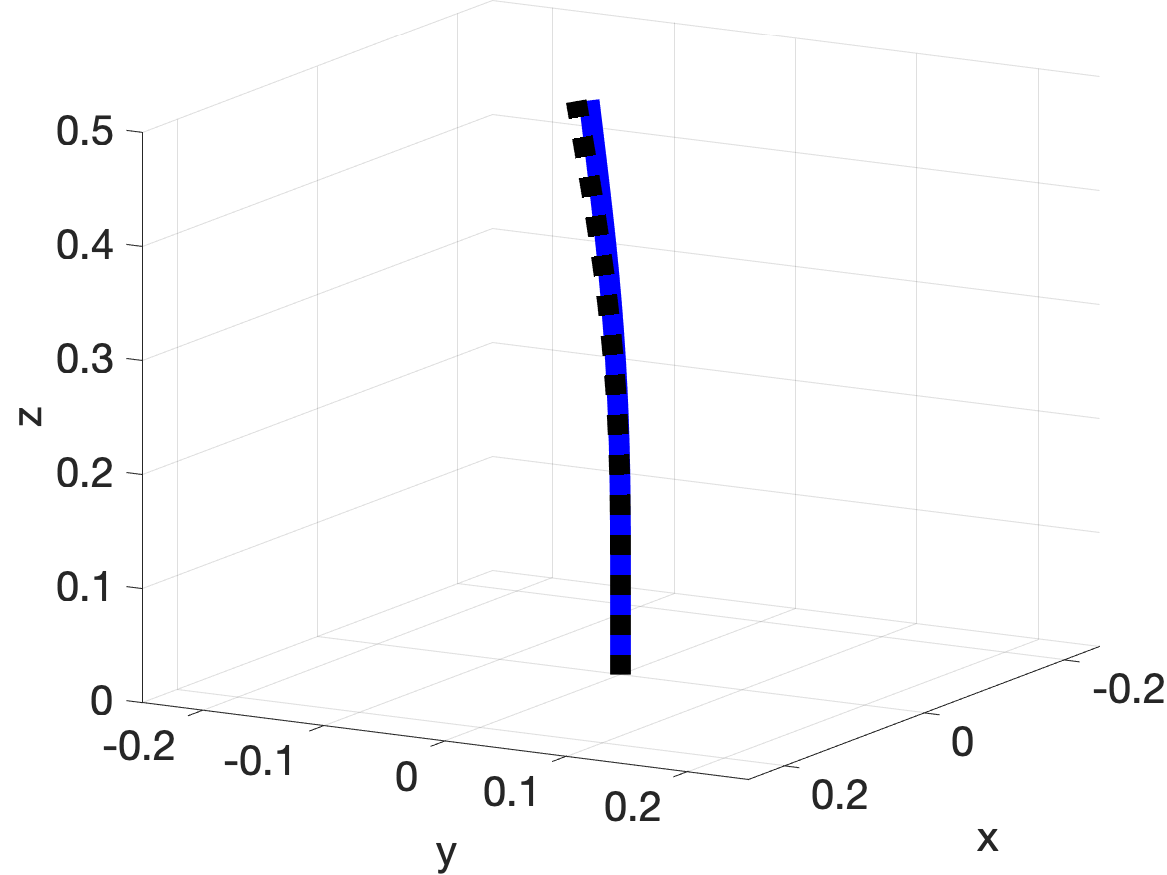}
    \end{subfigure}
    \begin{subfigure}[b]{0.19\textwidth}
        \centering
        \includegraphics[width=\textwidth]{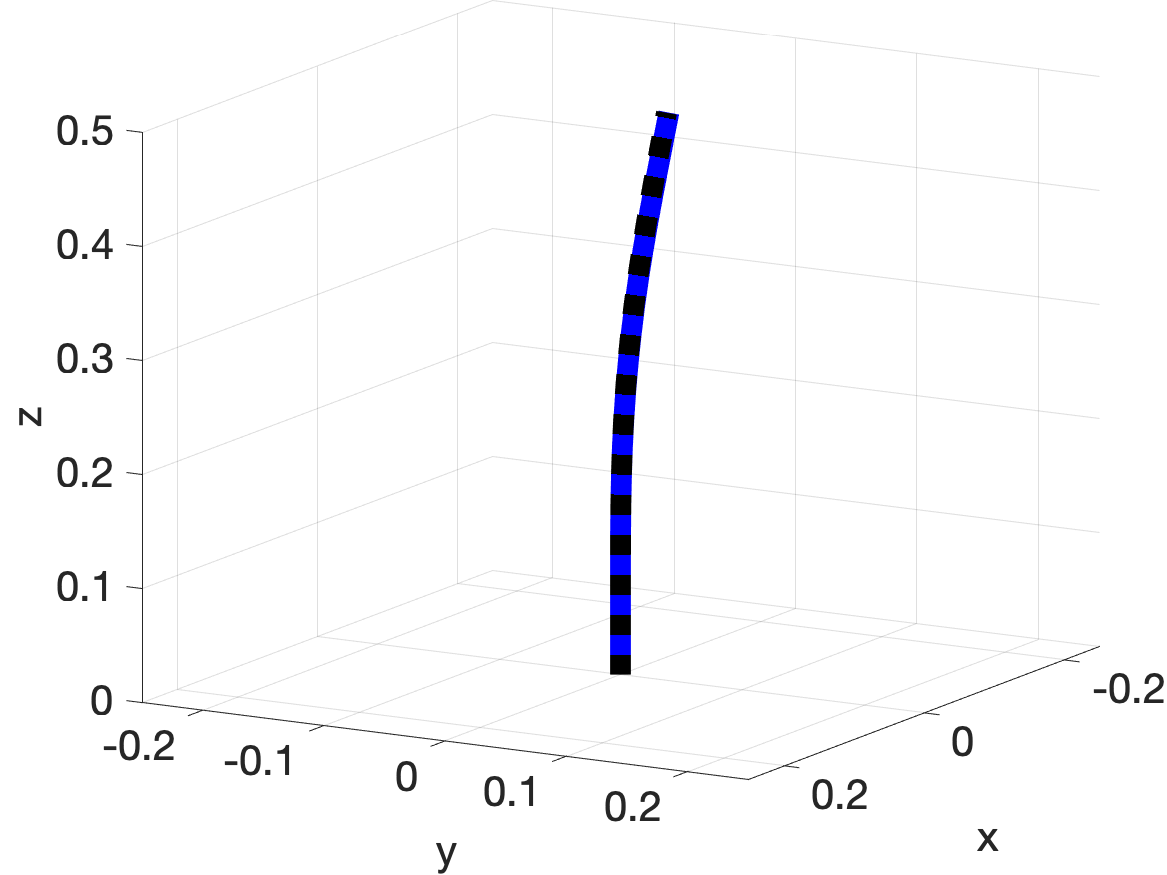}
    \end{subfigure}
    \begin{subfigure}[b]{0.19\textwidth}
        \centering
        \includegraphics[width=\textwidth]{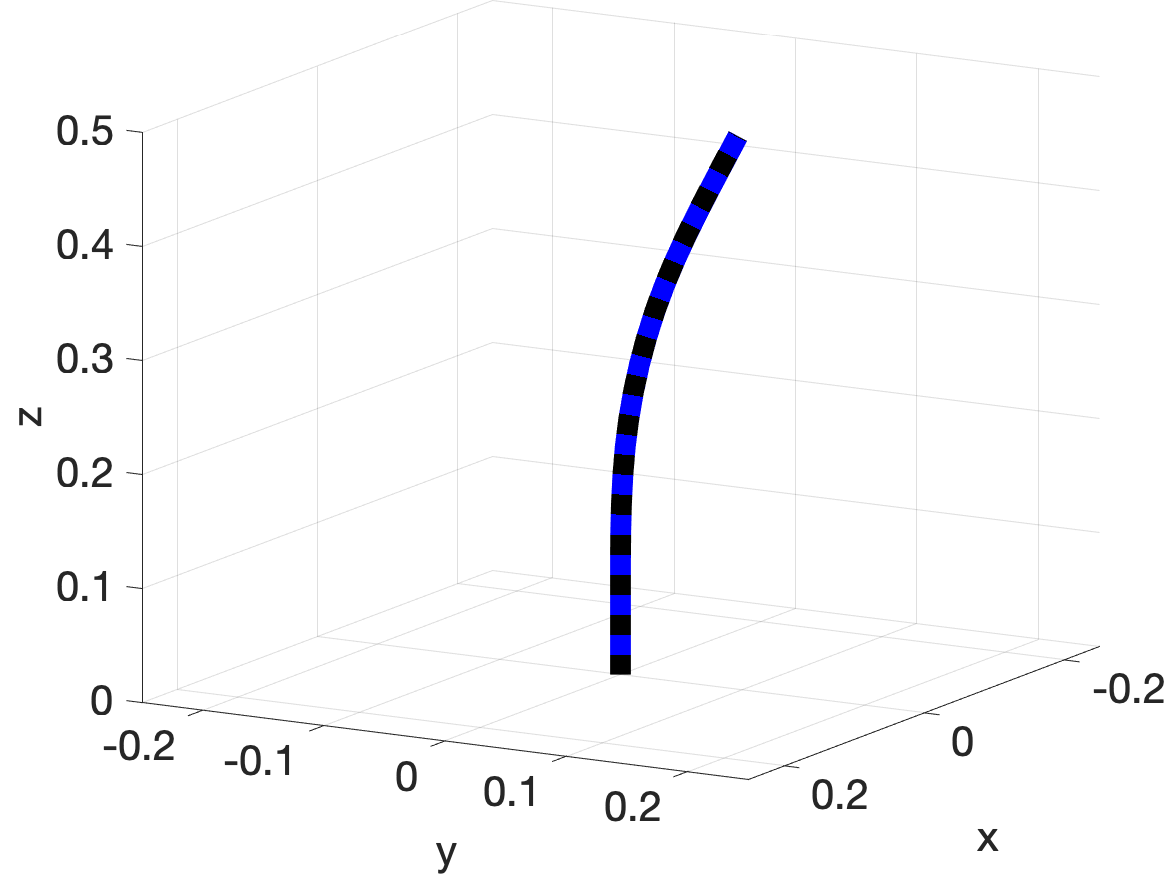}
    \end{subfigure}
    \caption{Configuration tracking. Black dotted: desired configuration. Blue solid: actual configuration.}
    \label{fig:tracking}
\end{figure*}

\begin{remark}
The infinite-dimensional extended Kalman filter can be numerically implemented using approximations such as finite difference or finite element methods.
The computation is fast in general because the resulting matrix representation of $\mathcal{A}_{\hat{\xi}}$ is highly sparse \cite{zheng2021distributedmean}.
\end{remark}

\section{Simulation study}
\label{section:simulation}

To verify the effectiveness of the proposed algorithms, a simulation study is performed on MATLAB.
The system parameters are chosen as $L=0.5\mathrm{m}$, $r=0.02\mathrm{m}$ (radius of cross-section), $\rho=2000\mathrm{kg/m^3}$, $E=0.03\mathrm{Gpa}$, and $G=0.01\mathrm{Gpa}$.
The robotic arm is initially undeformed and lies on the $z$-axis.
Hence, the initial conditions are given by $p(s,0)=[0~0~s]^T$, $R(s,0)=I$, $q(s,0)=[0~0~1]^T$, $u(s,0)=[0~0~0]^T$, $v(s,0)=[0~0~1]^T$, and $\omega(s,0)=[0~0~1]^T$.
The PDE is solved using finite difference methods where we set $ds=0.025$ and $dt=0.0002$.
The desired trajectory is a swinging motion on the $yz$-plane (see Fig. \ref{fig:tracking}) and its initial states differ from the actual states.
We assume the output is given by $p(s,t)+w(s,t)$ where $w\sim\mathcal{N}(0,0.02I)$ for $\forall s$.
The other states are estimated using the proposed extended Kalman filter.
Since the initial configuration is undeformed, precise initial estimates are easily obtained.
This ensures that the linearized solution remains close to the nominal solution.

\begin{figure}[hbt!]
    \centering
    \includegraphics[width=0.9\columnwidth]{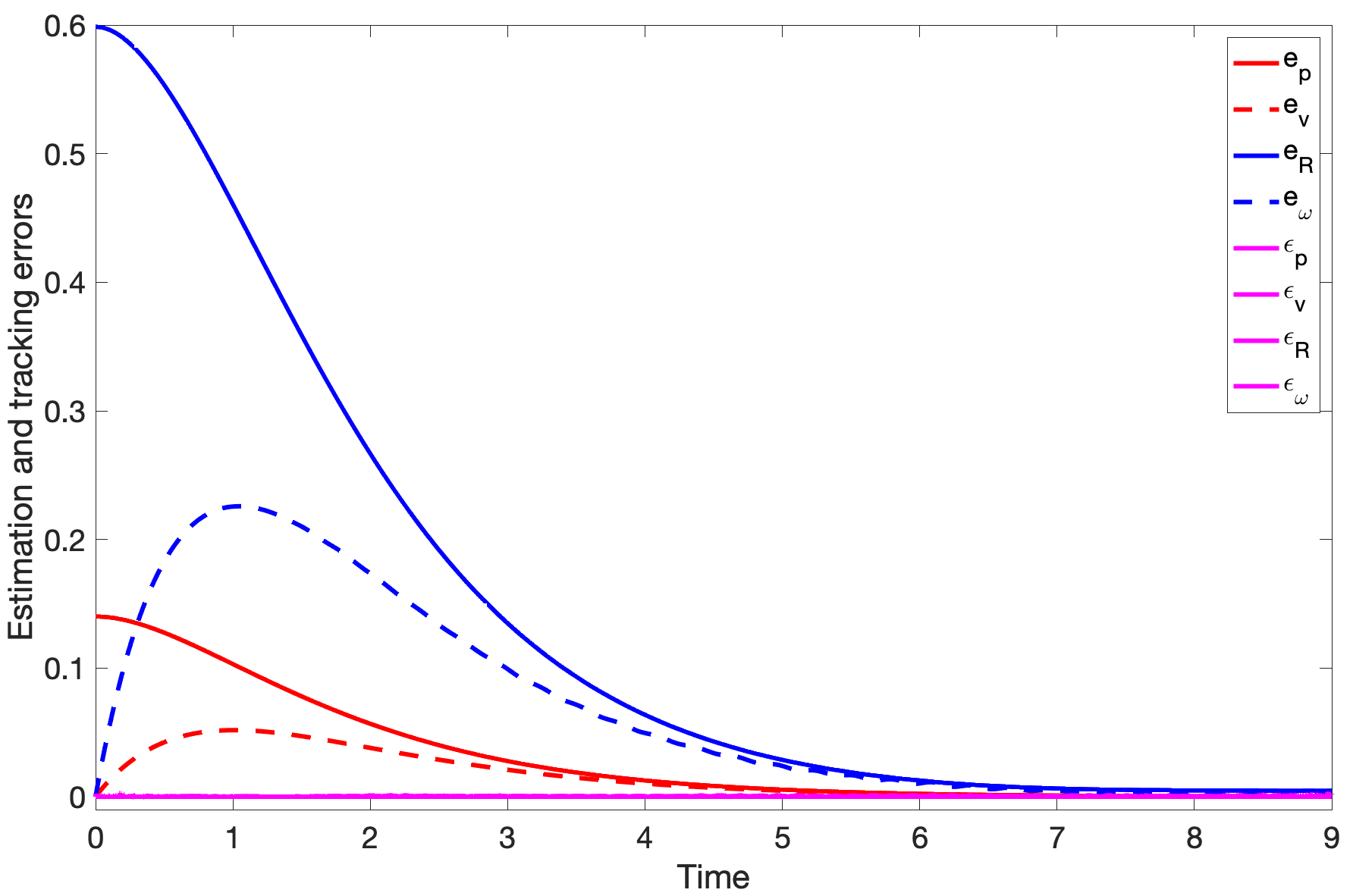}
    \caption{Estimation and tracking errors.}
    \label{fig:errors}
\end{figure}

The estimation errors are denoted by $(\epsilon_p,\epsilon_R,\epsilon_v,\epsilon_\omega)$.
Their sup-norms are plotted in Fig. \ref{fig:errors}, which are observed to remain close to 0 because of the precise initial estimates.
These estimates are used to compute the feedback controller, where the gains are set to be $k_p=k_R=1,k_v=k_\omega=2$.
The sup-norms of the tracking errors are also plotted in Fig. \ref{fig:errors}, which quickly converge to 0.
An illustration of the tracking behavior is given in Fig. \ref{fig:tracking}, where we observe that the initially undeformed robotic arm is able to catch up and eventually keep tracking the desired motion.

\section{Conclusion}
\label{section:conclusion}
In this work, we presented infinite-dimensional state feedback control to achieve trajectory tracking of soft robotic arms that are modeled by nonlinear Cosserat-rod PDEs and proved their uniformly exponential convergence.
To estimate the feedback states, we linearized the PDE system on Lie groups using the exponential map and designed an infinite-dimensional extended Kalman filter to estimate all system states (including position, rotation, curvature, linear and angular velocities) using only position measurements.
These algorithms provided a first step towards the exploitation of infinite-dimensional control and estimation theory for soft robotic systems.
Our future work is to extend the control design when the system is underactuated and study the convergence property of the proposed extended Kalman filter.

\section*{Appendix: linearization on $SO(3)$}
Based on $R'\approx R+R\delta\eta^\wedge$, we have
\begin{align*}
    R'_s & \approx\big(R(I+\delta\eta^\wedge)\big)_s=R_s+R_s\delta\eta^\wedge+R\delta\eta_s^\wedge=:R_s+\delta R_s \\
    q' & =R'^Tp'_s\approx(I-\delta\eta^\wedge)R^T(p_s+\delta p_s) \\
    & =q+q^\wedge\delta\eta+R^T\delta p_s=:q+\delta q \\
    q'_s & \approx q_s+q_s^\wedge\delta\eta+q^\wedge\delta\eta_s+R_s^T\delta p_s+R^T\delta p_{ss}=:q_s+\delta q_s\\
    u' & =(R'^TR'_s)^\vee\approx\big((I-\delta\eta^\wedge)R^T(R_s+R_s\delta\eta^\wedge+R\delta\eta_s^\wedge)\big)^\vee \\
    & =(u^\wedge+u^\wedge\delta\eta^\wedge+\delta\eta_s^\wedge-\delta\eta^\wedge u^\wedge)^\vee=u+u^\wedge\delta\eta+\delta\eta_s \\
    & =:u+\delta u, \\
    u'_s & \approx u_s+u_s^\wedge\delta\eta+u^\wedge\delta\eta_s+\delta\eta_{ss}=:u_s+\delta u_s.
\end{align*}
Using the chain rule and the pre-computed terms, we derive the following linearized PDE for the perturbations of \eqref{eq:PDE system}:
\begin{align*}
\begin{split}
    \delta p_t= & \delta v \\
    \delta\eta_t= & -\omega^\wedge\delta\eta+\delta\omega \\
    \delta v_t= & \frac{1}{\rho\sigma}\Big[\delta RK_l(q_s-\bar{q}_s)+RK_l\delta q_s \\
    & +\delta R_sK_l(q-\bar{q})+R_sK_l\delta q+\delta f_c\Big] \\
    = & \frac{1}{\rho\sigma}\Big[-R\big(K_l(q_s-\bar{q}_s)\big)^\wedge\delta\eta \\
    & +RK_l(q_s^\wedge\delta\eta+q^\wedge\delta\eta_s+R_s^T\delta p_s+R^T\delta p_{ss}) \\
    & -R_s\big(K_l(q-\bar{q})\big)^\wedge\delta\eta-R\big(K_l(q-\bar{q})\big)^\wedge\delta\eta_s \\
    & +R_sK_l(q^\wedge\delta\eta+R^T\delta p_s)+\delta f_c\Big]
\end{split}
\end{align*}
\begin{align*}
\begin{split}
    \delta\omega_t= & (\rho J)^{-1}\Big[K_a\delta u_s-\big(K_a(u-\bar{u})\big)^\wedge\delta u+u^\wedge K_a\delta u \\
    & -\big(K_l(q-\bar{q})\big)^\wedge\delta q+q^\wedge K_l\delta q+(\rho J\omega)^\wedge\delta\omega \\
    & -\omega^\wedge\rho J\delta\omega+\delta R^T(l_e+l_c)+R^T\delta l_c\Big] \\
    = & (\rho J)^{-1}\Big[K_a(u_s^\wedge\delta\eta+u^\wedge\delta\eta_s+\delta\eta_{ss}) \\
    & +\Big(u^\wedge K_a-\big(K_a(u-\bar{u})\big)^\wedge\Big)(u^\wedge\delta\eta+\delta\eta_s) \\
    & +\Big(q^\wedge K_l-\big(K_l(q-\bar{q})\big)^\wedge\Big)(q^\wedge\delta\eta+R^T\delta p_s) \\
    & +\big((\rho J\omega)^\wedge-\omega^\wedge\rho J\big)\delta\omega+\big(R^T(l_e+l_c)\big)^\wedge\delta\eta+R^T\delta l_c\Big]
\end{split}
\end{align*}
where $R$, $q$, and $u$ are understood as functions of $p$ and $\eta$.

Denote $\xi=(p,\eta,v,\omega)\in\mathbb{R}^{12}$.
To obtain a compact representation, define the following linear operators whose values depend on $\xi$ (and keep in mind that differentiation is a linear operator):
\begin{align*}
    \mathcal{A}_\xi & =\begin{bmatrix}
    0 & 0 & \mathcal{I} & 0 \\
    0 & -\omega^\wedge & 0 & \mathcal{I} \\
    \mathcal{A}_{31} & \mathcal{A}_{32} & 0 & 0 \\
    \mathcal{A}_{41} & \mathcal{A}_{42} & 0 & \mathcal{A}_{44}
    \end{bmatrix}, \\
    \mathcal{C} & =\begin{bmatrix}
    \mathcal{I} & 0 & 0 & 0
    \end{bmatrix}
\end{align*}
where $\mathcal{I}$ is the identity operator and
\begin{align*}
    \mathcal{A}_{31}= & \frac{1}{\rho\sigma}(RK_lR^T\partial_{ss}+RK_lR_s^T\partial_s+R_sK_lR^T\partial_s), \\
    \mathcal{A}_{32}= & \frac{1}{\rho\sigma}\Big[RK_lq^\wedge\partial_s-R\big(K_l(q-\bar{q})\big)^\wedge\partial_s \\
    & -R\big(K_l(q_s-\bar{q}_s)\big)^\wedge+RK_lq_s^\wedge \\
    & -R_s\big(K_l(q-\bar{q})\big)^\wedge+R_sK_lq^\wedge\Big], \\
    \mathcal{A}_{41}= & (\rho J)^{-1}\Big[q^\wedge K_lR^T\partial_s-\big(K_l(q-\bar{q})\big)^\wedge R^T\partial_s\Big], \\
    \mathcal{A}_{42}= & (\rho J)^{-1}\Big[K_a\partial_{ss}+K_au^\wedge\partial_s-\big(K_a(u-\bar{u})\big)^\wedge\partial_s \\
    & +u^\wedge K_a\partial_s+K_au_s^\wedge-\big(K_a(u-\bar{u})\big)^\wedge u^\wedge+u^\wedge K_au^\wedge \\
    & -\big(K_l(q-\bar{q})\big)^\wedge q^\wedge+q^\wedge K_lq^\wedge+\big(R^T(l_e+l_c)\big)^\wedge\Big], \\
    \mathcal{A}_{44}= & (\rho J)^{-1}\big[(\rho J\omega)^\wedge-\omega^\wedge\rho J\big].
\end{align*}

\bibliographystyle{IEEEtran}
\bibliography{References}

\end{document}